\newtheorem{theorem}{Theorem}
\newtheorem{lemma}[theorem]{Lemma}
\newtheorem{proposition}{Proposition}
\icmltitlerunning{Learnable Explicit Density for Continuous Latent Space and Variational Inference}
\begin{document} 
	
	\title{Learnable Explicit Density}
	
	\twocolumn[
	\icmltitle{Learnable Explicit Density for \\ Continuous Latent Space and Variational Inference}
	
	
	

	\begin{icmlauthorlist}
	\icmlauthor{Chin-Wei~Huang}{mila}
    \icmlauthor{Ahmed~Touati}{mila}
    \icmlauthor{Laurent~Dinh}{mila}
    \icmlauthor{Michal~Drozdzal}{mila,imagia}
    \icmlauthor{Mohammad~Havaei}{mila,imagia}
    \icmlauthor{Laurent~Charlin}{mila,hec}
    \icmlauthor{Aaron~Courville}{mila,cifar}
	\end{icmlauthorlist}
	
    \icmlaffiliation{mila}{MILA, Universit\'e de Montr\'eal, Canada}
    \icmlaffiliation{imagia}{Imagia Inc., Canada}
    \icmlaffiliation{hec}{HEC Montr\'eal, Canada}
    \icmlaffiliation{cifar}{CIFAR Fellow, Canada}

    \icmlcorrespondingauthor{Chin-Wei Huang}{cw.huang427@gmail.com}
    
	\icmlkeywords{boring formatting information, machine learning, ICML}
	
	\vskip 0.3in
	]
	
	
	
	\printAffiliationsAndNotice{}  
	
	\begin{abstract} 
		
In this paper, we study two aspects of the variational autoencoder (VAE): the prior distribution over the latent variables and its corresponding posterior.
		First, we decompose the learning of VAEs into layerwise density estimation, and argue that having a flexible prior is beneficial to both sample generation and inference.
		Second, we analyze the family of inverse autoregressive flows (inverse AF) and show that with further improvement, inverse AF could be used as universal approximation to any complicated posterior.
		Our analysis results in a unified approach to parameterizing a VAE, without the need to restrict ourselves to use factorial Gaussians in the latent real space. 
	\end{abstract} 
	
	\section{Introduction}

	Deep Gaussian Latent Models ~\cite{Rezende2014}, also known as Variational Autoencoders (VAEs) ~\cite{Kingma2014}, fall within the paradigm of Maximum Likelihood Estimate (MLE) and are often applied in computer vision problems. 
	However, training with MLE usually leads to overestimation of the entropy of the data distribution ~\cite{mpass}. 
	This is an undesirable property, as natural images are usually assumed to lie within a lower dimensional manifold, and the additional entropy (and other simplifying modeling assumptions for the purpose of explicit density estimation) often leads to a marginal likelihood with probability mass spread out in the data space where there is no support in the training data, which causes the blurriness of samples. 
	These observations motivate the design of more flexible, complex families of model densities.

	Since a continuous latent variable $z$ is introduced to the model, VAEs can be interpreted as an infinite mixture model $p(x) = \int_z p(x|z)p(z)dz$ where the parameters of the class conditional distribution $p(x|z)$ are functions of the latent variable $z$ (which is thought of as class here), and there are infinitely many classes.
	Such models should theoretically have enough flexibility to capture highly complex distributions such as image manifolds, but in practice it is found to be overshadowed
	by tractable density models such as autoregressive models ~\cite{VanDenOord}, or Generative Adversarial Networks (GANs) ~\cite{NIPS2014_5423} in terms of sample generation quality.

	It is believed that the relative poor performance in sample quality lies in the fact that the introduction of a latent representation requires approximate inference, as the model distribution is biased by simplifying posterior densities ~\cite{dpca}; i.e. training is achieved by maximizing the variational lower bound on the marginal log likelihood:
	\vspace{-5 pt}
	\begin{equation}
	\mathcal{L}(\theta, \phi, \pi; x) = \mathbf{E}_{q_\phi(z|x)}\left[\log \frac{ p_\theta(x|z) p_\pi(z)}{ q_\phi(z|x)}\right]
	\label{eq:elbo}
	\vspace{-5 pt}
	\end{equation}
	where subscripts $\theta,\phi,\pi$ denote the parameters of the associated distributions.
	
	We discuss two aspects of training with the bound. First, maximizing (\ref{eq:elbo}) with respect to $\phi$ amounts to minimizing $\mathcal{KL}(q_\phi(z|x)||p(z|x))$; the variational distribution, $q(z|x)$, can thus be viewed as an approximate to the true posterior, $p(z|x)$.
	Simplifying $q(z|x)$ (e.g. by using a factorial Gaussian as a common practice) is problematic, as the marginal log likelihood of interest $\log p(x)$ can only be optimized to the extent we are able to approximate the true posterior using the variational distribution. 
	This motivates a direct improvement of variational inference ~\cite{Rezende2015,Ranganath2014,Kingma2016}.

	Second, during training of the VAE, only a part of the latent space is explored. 
	When marginalizing out the input vector $x$, we recover the marginal $q(z) = \int_x q(z|x)p_\mathcal{D}(x)$, 
	where $\mathcal{D}$ indicates the true data distribution.
	When the marginal approximate posterior fails to fill up the prior as the prior-contractive term requires, one would risk sampling from untrained regions in the latent space.
	A direct and non-parametric treatment of sampling from such regions of the prior would be to take $q(z)$ as the prior, but the integral is intractable and the data distribution is only partially specified by a limited training data.
	Even if we take the empirical distribution of $p_\mathcal{D}(x)$, we would have a mixture model of up to $n$ components, where $n$ is the number of training data points, which would be impractical given the scale of modern machine learning tasks. 
	A workaround of this problem is to take a random subset of $\mathcal{D}$, or introduce a learnable set of pseudo-data of size $K$, and set the prior to be $p(z) = \sum_{j=1}^K \frac{1}{K}q(z_j|x_j)$, which is shown to be promising in the recent work done by ~\citet{Tomczak}. 
	Another approach is to directly regularize the autoencoder by matching the aggregated posterior with the prior, as in ~\citet{MakhzaniSJG15}.
	

	In this paper, we make two main contributions. First, we analyze the effect of making the prior learnable. 
	We show that training with the variational lower bound under some limit conditions matches the marginal approximate posterior with the prior, which is desirable from the generative model point of view. 
	We then decompose the lower bound, and show that updating the prior alone brings the prior closer to the marginal approximate posterior, suggesting that having the prior trainable is beneficial to both sample generation and inference. 
	Our second contribution is to prove that by using the family of inverse AF ~\cite{Kingma2016}, one can universally approximate any posterior. 
	This theoretically justifies the use of inverse AF to improve variational inference. 
	We unified the two aspects and propose to use invertible functionals ~\citet{DinhSB16} and ~\citet{Kingma2016} to parameterize explicit densities for both the prior and approximate posterior.


	\section{Marginal Matching Prior}
	\label{sec:mmp}
	
	\begin{figure}[t]
		\centering
		\includegraphics[width=0.6\linewidth]{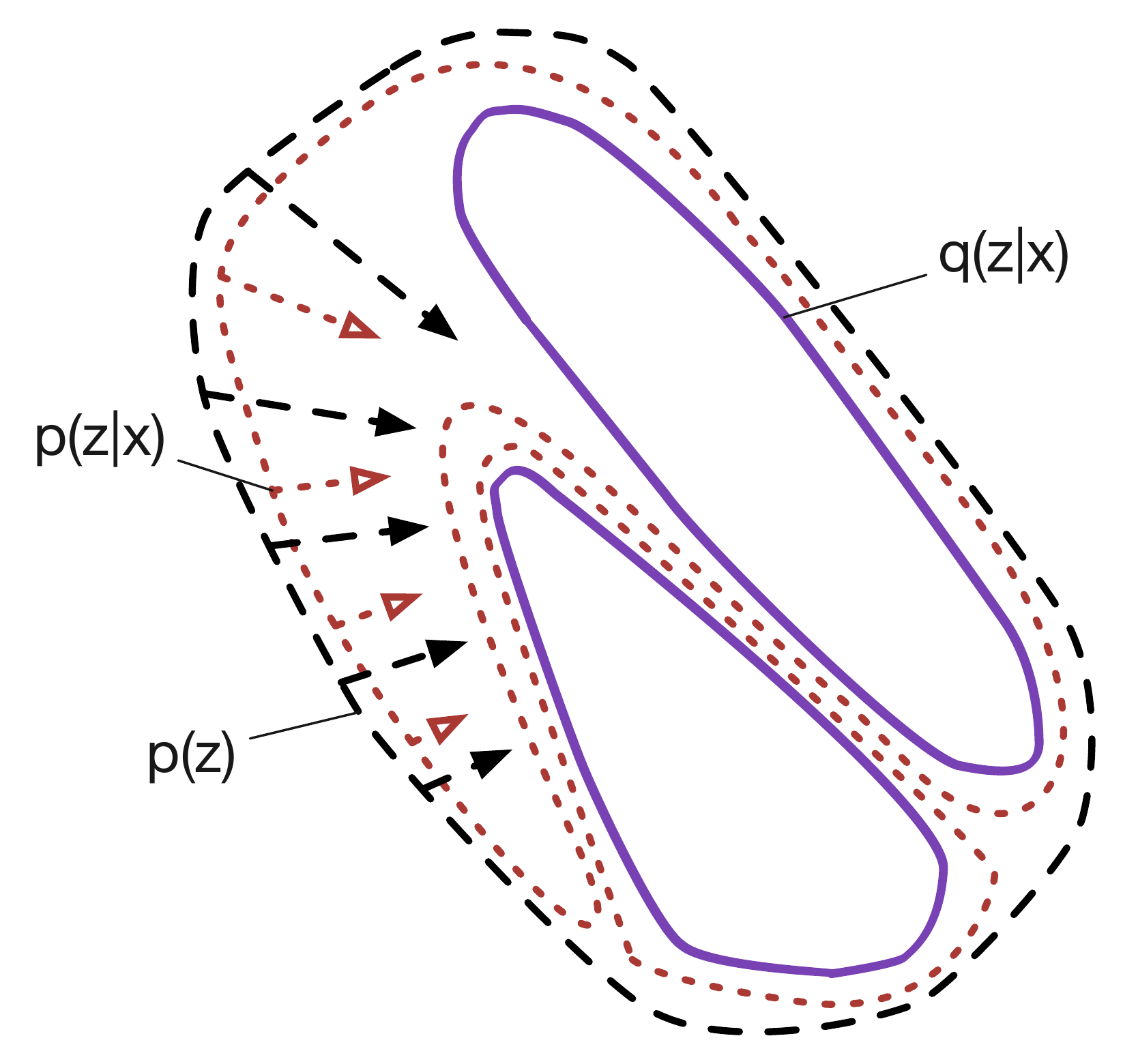}
		\caption{Effect of prior on posterior. 
			Matching the prior $p(z)$ with the marginal approximate posterior $\mathbf{E}_x[q(z|x)]$ makes the true posterior $p(z|x)$ easier to model, since it pushes the true posterior to be closer to the approximate posterior.}
		\label{fig:mmp}
		\vspace{-15pt}
	\end{figure}

	We claim that maximizing the variational lower bound explicitly matches the marginal $q(z)$ with the prior $p(z)$. 
	By decomposing the lower bound, we then suggest using a learnable prior to improve sampling, i.e. to have a prior that matches the marginal $q(z)$ instead.
	
	Let us define encoding and decoding distributions as $q(z|x)$ and $p(x|z)$ respectively, a prior as $p(z)$ and a data distribution as $p_\mathcal{D}(x)$. 
	Our goal is to train an auto-encoder as a generative model by keeping $q(z) = \int_x p_\mathcal{D}(x)q(z|x)dx$ close to the prior.
	This can be achieved at the limits of the following two conditions ~\cite{elbosurgery}:
	%
	\vspace{-1pt}
	\begin{center}
		$\textit{1. }q(z|x)\rightarrow p(z|x) \quad\forall x\sim p_\mathcal{D}(x) \quad\quad\textit{2. }p(x)\rightarrow p_\mathcal{D}(x)$
	\end{center}
	\vspace{-3pt}
	In words, given a perfect approximate posterior $q(z|x)$ of $p(z|x)$ and a perfect marginal likelihood $p(x)$ of $p_\mathcal{D}(x)$, we have the marginal $q(z)$ converge to the prior, i.e.
	%
	\begin{equation}
	\begin{split}
	q(z) &= \int_x p_\mathcal{D}(x)q_\phi(z|x)dx \\
	&\overset{\textit{1.}}{\rightarrow} \int_x p_\mathcal{D}(x)p_{\theta,\pi}(z|x)dx \\
	&\overset{\textit{2.}}{\rightarrow} \int_x p_{\theta,\pi}(x)p_{\theta,\pi}(z|x)dx = p_\pi(z)
	\end{split}
	\end{equation}
	%
	That is, to have $q(z)\rightarrow p(z)$, we need to ensure the two conditions are satisfied. 
	We can cast it as an optimization problem by minimizing the KL-divergences:
	\vspace{5pt}
	\begin{equation} 
	\begin{split}
	&\min \mathbf{E}_{p_\mathcal{D}(x)}[\mathcal{KL}(q(z|x)||p(z|x))] + \mathcal{KL}(p_\mathcal{D}(x)||p(x)) \\
	&= \max \mathbf{E}_{x\sim p_\mathcal{D}(x)}[\mathcal{L}(\theta,\phi,\pi;x)]
	\end{split}
	\label{eq:kl-elbo}
	\end{equation}
	%
	The equality is a direct result of rearrangement of terms. 
	What (\ref{eq:kl-elbo}) implies is that maximizing the variational lower bound brings us to the limit conditions under which marginal approximate posterior $q(z)$ should match the prior given enough flexibility in the assumed form of densities. 
	
	Now if we maximize (\ref{eq:kl-elbo}) with respect to $\pi$ while holding $\theta$ and $\phi$ fixed like doing coordinate ascent, the samples drawn from the doubly stochastic process $x\sim p_\mathcal{D}(x)$, $z\sim q(z|x)$ can be thought of as a projected data distribution that we want to model using the prior distribution:
	\vspace{5pt}
	\begin{equation}
    \max_\pi \mathbf{E}[\mathcal{L}] =
	\min_\pi \mathcal{KL}(\mathbf{E}_{x\sim p_\mathcal{D}(x)}[q(z|x)]||p_\pi(z))
	\label{eq:mmp}
	\vspace{5pt}
	\end{equation}

	As a result, having a learnable prior allows us to sample from the marginal approximate posterior if the above divergence metric goes to zero. 
    
	Another advantage of a learnable prior can be visualized by the cartoon plot in Figure \ref{fig:mmp}. 
	When we fix the approximate posterior and update the prior such that it becomes closer to the marginal approximate posterior, it concentrates the probability mass in such a way that the true posterior becomes closer to the approximate posterior, as $p(z|x)\propto p(z)$.
	In other words, the region of high posterior density not covered by the approximate posterior will be reduced, which effectively means our proposal as variational distribution could be improved by having a better prior which simplifies the true posterior.

	\section{Inverse Autoregressive Flows as Universal Posterior Approximator}
	\label{sec:univ}
	
	In ~\citet{Kingma2016}, a powerful family of invertible functions called the Inverse Autoregressive Flows (inverse AF or IAF) were introduced, to improve variational inference.
	It is thus of practical and fundamental importance to understand the benefits of using inverse AF and how to improve them.

	In this section, we show that normalizing flows from a base distribution (such as uniform distribution) under autoregressive assumptions are universal approximators of any density (as suggested in ~\citet{ganstutorial}), given enough capacity when a neural network is used to parameterize non-linear dependencies.

	\begin{lemma}
		\textbf{Existence of solution to a nonlinear independent component analysis problem.} Given a random vector $X = (X_i)_{i=1 \ldots m} \in \mathbf{R}^m $, there always exists a mapping $g$ from $\mathbf{R}^m$ to $\mathbf{R}^m$ such that the components of the random vector $Y=f(X)$ are statistically independent.  
		
		\label{lemma:existence}
		\vspace{-5pt}
	\end{lemma}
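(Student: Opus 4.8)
The plan is to exhibit the mapping explicitly rather than argue abstractly; the relevant construction is classical, appearing as the Rosenblatt transformation in statistics and as the Darmois construction in the nonlinear ICA literature, and it has precisely the autoregressive (triangular) structure invoked in this section. Writing $X_{1:i-1}=(X_1,\dots,X_{i-1})$ and letting $F_i(\,\cdot\mid x_{1:i-1})$ denote the conditional CDF of $X_i$ given $X_{1:i-1}=x_{1:i-1}$ (with $F_1$ the plain CDF of $X_1$), I would define $f=(f_1,\dots,f_m)$ by
\begin{equation}
Y_i \;=\; f_i(X_{1:i}) \;=\; F_i\!\left(X_i\mid X_{1:i-1}\right), \qquad i=1,\dots,m ,
\end{equation}
so that $Y_i$ depends only on $X_1,\dots,X_i$.

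The argument then proceeds in four steps. First, assume $X$ has a density with respect to Lebesgue measure --- the regime relevant to a continuous latent space --- so that every $F_i(\,\cdot\mid x_{1:i-1})$ is continuous. Second, recall the conditional probability integral transform: for a real random variable with continuous CDF $G$, the variable $G(\,\cdot\,)$ is $\mathrm{Unif}[0,1]$; applied conditionally, this says that given $X_{1:i-1}$ the variable $Y_i$ is $\mathrm{Unif}[0,1]$ irrespective of the value of the conditioning variables. Third, prove by induction on $i$ that $(Y_1,\dots,Y_i)$ is uniform on $[0,1]^i$: the base case $i=1$ is step two, and for the inductive step note that $(Y_1,\dots,Y_{i-1})$ is an invertible function of $X_{1:i-1}$, so the conditional law of $Y_i$ given $(Y_1,\dots,Y_{i-1})$ equals its conditional law given $X_{1:i-1}$, which by step two is $\mathrm{Unif}[0,1]$ and independent of the conditioning; combining this with the inductive hypothesis yields the product-uniform law on $[0,1]^i$. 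Fourth, taking $i=m$ shows $Y$ is uniform on the cube $[0,1]^m$, hence its components are statistically independent. If a specific factorial base distribution is desired, composing $f$ with a fixed coordinatewise quantile function (e.g. the Gaussian one) preserves independence while changing each marginal.

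The main obstacle is regularity rather than algebra. When $X$ has atoms, $F_i$ jumps, $Y_i$ fails to be uniform, and step two collapses; this is patchable by a randomized tie-breaking variant of the construction, but for the continuous-latent setting of this paper we may simply assume absolute continuity. The second regularity issue, relevant if $f$ is to be deployed as a normalizing flow through the change-of-variables formula, is invertibility and smoothness: one wants the density strictly positive on a product-shaped support so that each $F_i(\,\cdot\mid x_{1:i-1})$ is strictly increasing (hence invertible), and $C^1$ so that $f$ is a triangular diffeomorphism whose Jacobian is lower-triangular with the conditional densities on the diagonal --- which is exactly what makes the autoregressive flow's likelihood tractable. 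Establishing that these assumptions can be met with neural-network parameterizations, as the surrounding text claims, is where the remaining work lies.
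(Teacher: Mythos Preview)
Your proposal is correct and follows essentially the same route as the paper: the paper cites Hyv\"arinen and points to the autoregressive conditional-CDF (Rosenblatt/Darmois) construction, writing $y_i = P(x \le x_i \mid y_1,\dots,y_{i-1})$ and observing $Y \sim \mathrm{Unif}([0,1]^m)$, which is exactly your map up to the harmless choice of conditioning on $X_{1:i-1}$ versus $Y_{1:i-1}$ (identified by the triangular invertibility you invoke in your step three). Your induction and regularity discussion fill in details the paper leaves to the reference.
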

	\begin{proof}
		See ~\citet{Hyvarjnen1998} for the full proof.
		Here we point out that the transformation $g$ used in the proof falls within the family of autoregressive functions: $f = (f_i)_{i=1 \ldots m}$  where $y_i = f_i(x_i, y_1,\ldots, y_{i-1}) = P(x \leq x_i \,|\, y_1,...,y_{i-1})$, for $i=1 \ldots m$. $f_i$ is the conditional CDF and $Y \sim Unif([0,1]^m)$. Then any distribution of a random variable $x$ can be warped into an independent distribution via the CDFs, specifically by a kind of Gram-Schmidt process-like construction.
	\end{proof}

	\begin{proposition} \textbf{Inverse autoregressive transformation as universal approximator of any density.} Let $X$ be a random vector in an open set $\mathcal{U} \subset \mathbf{R}^m$. We assume that $X$ has a positive and continuous probability density distribution. 
		There exists a sequence of mappings $(G_{n})_{n \geq 0}$ from $(0, 1)^m$ to $\mathbf{R}^m$ parametrized by autoregressive neural networks such that the sequence $X_n = G_{n}(Y)$ where $Y \sim Unif((0, 1)^m)$ converges in distribution to X. 	
	\vspace{-5pt}
	\end{proposition}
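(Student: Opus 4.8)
The plan is to first extract from Lemma~\ref{lemma:existence} one \emph{exact} inverse-autoregressive representation of $X$, and then to approximate the (fixed) maps appearing in it by autoregressive neural networks, being careful about the non-compact domain $(0,1)^m$.

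First I would apply Lemma~\ref{lemma:existence} to $X$: the autoregressive map $f=(f_i)_{i=1\ldots m}$ with $f_i(x_i,y_1,\ldots,y_{i-1}) = P(X_i\le x_i \mid Y_1=y_1,\ldots,Y_{i-1}=y_{i-1})$ pushes the law of $X$ forward to $Y\sim\mathrm{Unif}((0,1)^m)$. Since the density of $X$ is positive, each $f_i(\cdot,y_{<i})$ is a strictly increasing function of its first argument, hence invertible; therefore the inverse $G:=f^{-1}$ is again autoregressive, $x_i = G_i(y_i,x_1,\ldots,x_{i-1})$ with $G_i(\cdot,x_{<i})$ the conditional quantile function. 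This $G$ is precisely an inverse autoregressive transformation and satisfies $X = G(Y)$ in distribution; moreover, because the density is positive \emph{and} continuous on the open set $\mathcal{U}$, the conditional CDFs are $C^1$ strictly increasing bijections onto $(0,1)$, so $G$ is continuous on $(0,1)^m$ (though possibly unbounded near the boundary when $X$ has unbounded support).

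Next, for a shrinking margin $\varepsilon=\varepsilon_n\downarrow 0$ I would restrict attention to the compact cube $K_n=[\varepsilon_n,1-\varepsilon_n]^m$ and to a compact set containing $G(K_n)$. On these compact sets the universal approximation theorem for feedforward networks provides, for each coordinate $i$, a neural network $\widehat G_i^{(n)}$ approximating $G_i$ to sup-norm error $\le 1/n$; wiring them with the autoregressive connectivity (the $i$-th block reads only $y_i$ and the previously produced outputs $x_1,\ldots,x_{i-1}$) yields an autoregressive network $G_n:(0,1)^m\to\mathbf{R}^m$. A modulus-of-continuity argument, inducting on $i$, then bounds how the per-coordinate errors propagate through the composition, giving $G_n\to G$ uniformly on $K_n$. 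Finally, since $Y\in K_n$ with probability $1-O(m\varepsilon_n)\to 1$, we get $G_n(Y)\to G(Y)$ in probability, and testing against a bounded continuous $h$ and splitting the expectation over $\{Y\in K_n\}$ and its complement shows $\mathbf{E}[h(G_n(Y))]\to\mathbf{E}[h(X)]$, i.e.\ $X_n=G_n(Y)$ converges in distribution to $X$.

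The step I expect to be the main obstacle is the error propagation in the composition: each $G_i$ is evaluated by the network not at the true $x_{<i}$ but at the network's own (already perturbed) outputs, so the approximation errors feed forward, and controlling this requires uniform continuity of the $G_i$ on compact sets together with a quantitative handle on the blow-up of $G$ near $\partial(0,1)^m$ — which is exactly what forces the localization to $K_n$ and the use of convergence in distribution rather than a stronger, global uniform statement.
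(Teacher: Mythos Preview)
Your proposal follows the same overall route as the paper: apply Lemma~\ref{lemma:existence} to obtain the autoregressive conditional-CDF map $f$, invert it to get the inverse-autoregressive $g=f^{-1}$ with $X=g(Y)$, approximate $g$ by autoregressive neural networks $G_n$, and deduce $G_n(Y)\Rightarrow X$. The difference is in the execution of the approximation step. The paper invokes the universal approximation theorem directly to get $G_n\to g$ uniformly on all of $(0,1)^m$ and then applies dominated convergence with a bounded continuous test function; it does not discuss the non-compactness of $(0,1)^m$, the possible unboundedness of $g$ near the boundary, or the error propagation through the autoregressive composition. Your localization to the cubes $K_n=[\varepsilon_n,1-\varepsilon_n]^m$, together with the coordinatewise induction on the modulus of continuity, addresses exactly these points and yields convergence in probability (hence in distribution) rather than the global uniform convergence the paper asserts. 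In short, the strategy is identical; your version is more careful about the technical hypotheses of the universal approximation theorem, which the paper leaves implicit.
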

	\begin{proof}
		We consider the mapping $f$ defined in the proof of Lemma \ref{lemma:existence}. 
		As $f$ is autoregressive, the Jacobian of $f$ is an upper triangular matrix whose diagonal entries are equal to the conditional densities which are positive by assumption.
		The determinant of the Jacobian, which is equal to the product of diagonal entries, is positive. 
		By the \textit{inverse function theorem}, f is locally invertible. 
		As f is also injective (as follows from the bijectivity of CDF), f is globally invertible and let g denotes its inverse. 
		g is an autoregressive function and by the \textit{universal approximation theorem} \cite{Cybenkot1989}, we know that there exists a sequence of  mappings $(G_{n})_{n \geq 0}$ from $(0, 1)^m$ to $\mathbf{R}^m$ parametrized by autoregressive neural networks that converge uniformly to $g$. 
		Let $X_n = G_{n}(Y)$ where $Y \sim Unif((0, 1)^m)$. Let $h$ be a real-valued bounded continuous function on $\mathbf{R}^m$.
		The latter uniform convergence implies that since $G_n$ converge pointwise to $g$, then by continuity of $h$, $h\circ G_n$ converges pointwise to $h \circ g$. As $h$ is bounded, the \textit{dominated convergence theorem} gives that $ \mathbf{E}[h(X_n)] = \mathbf{E}[h(G_n(Y))]$ converges to $\mathbf{E}[h(g(Y))] = \mathbf{E}[h(X)]$. As the latter statement is valid for all bounded continuous function $h$, $X_n$ converge to $X$ in distribution.
		\vspace{-5pt}
	\end{proof}%

	Note that $G$ is usually parameterized as an invertible function, at the expense of flexibility, to have a tractable Jacobian. 
    Special designs of such a function, other than affine transformation ~\cite{Kingma2016}, could be made to improve the flow; otherwise one would need to compose multiple layers of transformations to have a richer distribution family.
    Our proof shows that, with careful designs of approximate posteriors, VAEs could have asymptotic consistency.
	
	\section{Proposed Method}
	\label{sec:method}
	\vspace{-1pt}
	As suggested in sections \ref{sec:mmp} and \ref{sec:univ}, we propose to use one-to-one correspondence to define a learnable explicit density (LED) model for both inference and sample generation.
	First, inspired by (\ref{eq:mmp}), we found that updating the prior alone is reminiscent of MLE. 
	One can think of data points projected onto the latent space via Monte Carlo sampling as a data distribution $q_\mathcal{D}(z) = \mathbf{E}_{p_\mathcal{D}(x)}[q(z|x)]$ in space $z$.
	A unimodal prior tends to overestimate the entropy of $q_\mathcal{D}(z)$.
	A powerful family of real non-volume preserving (Real NVP) transformations ~\cite{DinhSB16} can be applied to real variables. 
	It is thus natural to incorporate Real NVP into VAEs to jointly train an explicit density model as prior. 
	We define the prior (and also the approximate posterior) with change of variable formula: $p(z)=p(z_0)|\frac{\partial h}{\partial z_0}(z_0)|^{-1}$ where $h:z_0\rightarrow z$. 
	To compute the density of the projected data distribution, we inversely ($h^{-1}$) transform the samples $z\sim q_\mathcal{D}(z)$ into the base variable $z_0$ with tractable density ~\cite{Dinh2014}. 
	We define the posterior likewise, as in ~\citet{Rezende2015}, with $g:z'\rightarrow z$. 
	Objective (\ref{eq:elbo}) can thus be modified as
	
	\vspace{-5pt}
	\begin{equation}
	\begin{split}
	\mathcal{L}= \!\begin{aligned}[t] &\mathbf{E}_{q(z'|x)}[\log p(x|g(z'))] + \\
	&\mathbf{E}_{q(z'|x)}\left[\log p(h^{-1}\circ g(z')) + \log \left| \frac{\partial h^{-1}}{\partial z}(g(z')) \right| \right] - \\
	&\mathbf{E}_{q(z'|x)}\left[\log q(z'|x) - \log \left|\frac{\partial g}{\partial z'} (z')\right| \right] 
	\end{aligned}
	\end{split}
	\label{eq:obj}
	\end{equation}
	\vspace{-5pt}%
		
	For permutation invariant latent variables, $h$ is implemented with random masks. For latent variables that preserve the spatial correlation when a convolutional network is used, we choose to use a checkerboard style mask ~\cite{DinhSB16,2016arXiv161105209A}. 
	Interestingly, sampling of such models is similar to block Gibbs sampling for energy based models (e.g. Ising models) that define the correlation between adjacent pixels. 
	
	Second, for the posterior distribution, we construct $g$ by inverse AF, which is parallelizable when combined with MADE ~\cite{Germain:2015} or PixelCNN ~\cite{VanDenOord}. 
	In fact, inverse AF can be thought of as a generalization of Real NVP, as the Jacobian of the masked operation used in Real NVP is upper triangular. 
	
	\begin{figure}
		\centering
		\includegraphics[width=1.0\linewidth]{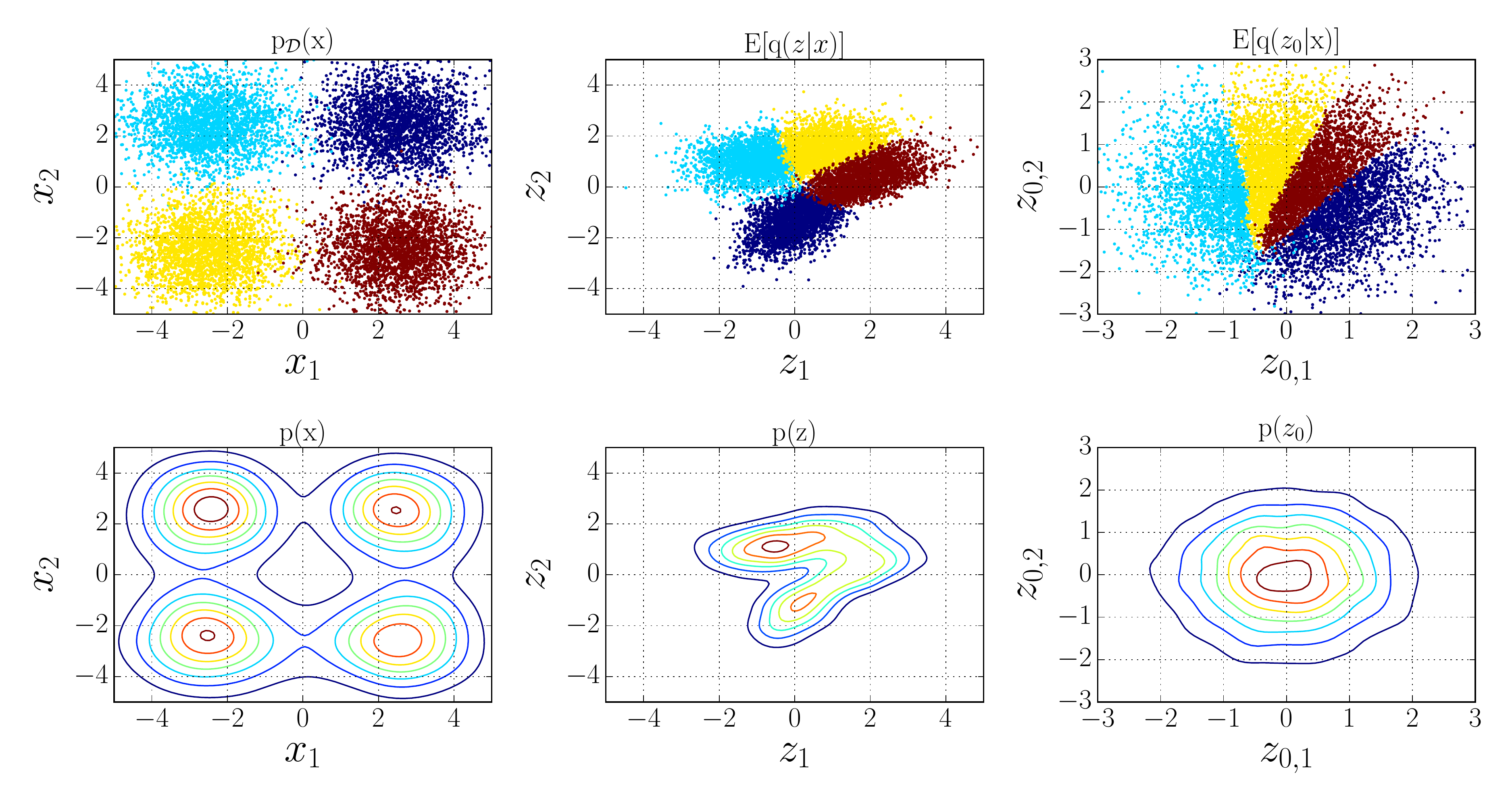}
		\caption{Fitting a Gaussian mixture distribution. $E[\cdot]$ indicates marginalization over the data $x\sim p_\mathcal{D}(x)$. 
			Clockwise from top left: projection of data distribution $p_\mathcal{D}(x)$ onto the prior space $\mathbf{E}(q(z|x))$, and the base distribution space $\mathbf{E}(q(z_0|x))$; density maps of the base distribution $p(z_0)$, the transformed prior $p(z)$ and and marginal model distribution $p(x)$.}
		\label{fig:encdec}
		\vspace{-5pt}
	\end{figure}

	\begin{table}[t]
		\centering
        \caption{Effect of increasing prior complexity. L\textsubscript{post}: number of MADE layers used for posterior. 
        Two hidden layers of 512 nodes were used for each layer of transformation.
        L\textsubscript{prior}: number of NVP layers used for prior. 
        One hidden layer of 100 nodes was used for each layer of transformation.
        For multi-layer perceptron, two hidden layers with 200 nodes were used and the dimension of the latent variable is 50. 
        Rectifier is used as non-linear activation. 
			For Residual ConvNet, we have 3 layers of residual strided convolution ~\cite{DBLP:journals/corr/HeZRS15} with [16,32,32] feature maps, using filter of size 3$\times$3. 
            Before the stochastic layer a hidden layer of 450 nodes is used. 
            The dimension of the latent variable is 32. 
            We use exponential linear units ~\cite{DBLP:journals/corr/ClevertUH15} as non-linearity. }
            \label{tab:nll_prior}
		\begin{tabular}{cccccc}
			\toprule
			\multicolumn{2}{c}{MLP} & \multicolumn{2}{c}{MLP} & \multicolumn{2}{c}{ResConv} \\
			\midrule
			L\textsubscript{post}  & NLL & L\textsubscript{prior}  & NLL & L\textsubscript{prior} & NLL\\
			\midrule
			0 & 90.78 & 0 & 90.78  & 0 & 83.11 \\
			4 & 88.89 & 4 & 88.07  & 4 & 81.87  \\
			8 & 88.71 & 8 & 87.47  & 8 & 81.70  \\
			12 & 88.70 & 12 & 86.59  & 12 & 81.44 \\
			\bottomrule
		\end{tabular}
	\vspace{-5pt}
	\end{table} 
	
	\begin{table}[t]
		\centering
        \caption{Effect of increasing both prior and posterior complexity.}
		\label{tab:nll_both}
		\begin{tabular}{ccc}
        	\toprule
        	\multicolumn{3}{c}{ResConv} \\
			\midrule
			L\textsubscript{prior} & L\textsubscript{post} & NLL \\
			\midrule
			4 NVP & 4 NVP & 81.81 \\
			8 NVP & 8 NVP & 81.55 \\
			8 NVP & 8 MADE & 80.81 \\
			16 NVP & 16 MADE & 80.60 \\
			\bottomrule
		\end{tabular}
	\vspace{-5pt}
	\end{table} 
	
	\section{Experiments}	
	\label{sec:exp}
	\vspace{-1pt}

	\textbf{Mixture of Bivariate Gaussians.} We experiment on a Gaussian mixture toy example, and visualize the effect of having a learnable prior in Figure \ref{fig:encdec}.
	During training, we observe that models with flexible prior are easier to train than models with flexible posterior. 
	Our first conjecture is that 
	to refine the posterior density, we only draw one sample of $z$ for each data point $x$, whereas refining the prior density can be viewed as modeling the projected data distribution and thus depends on as many samples as there are in the training set.
	Second, it might be due to the kind of transformation and the distance metrics that are used.
	To learn the posterior, we implicitly minimize $\mathcal{KL}(q(z|x)||p(z|x))$, which is zero forcing since samples in region that has low target density are heavily penalized.
	If $q$ begins with a sharper shape, it pays a high penalty by expansion
	to move to another mode.
	It is thus easy for the distribution to be stuck in local minima if the true posterior is multimodal, while learning the prior does not have this mode seeking problem since  the forward KL in (\ref{eq:mmp}) is zero avoiding.


	\textbf{MNIST.} We also tested our proposed method on binarized MNIST ~\cite{larochelle2011}, and report the estimated negative log likelihood as an evaluation metric.

	We compare the effects of adding more invertible transformation layers on either the prior or posterior (see Table \ref{tab:nll_prior}), or both (Table \ref{tab:nll_both}).
	From Table \ref{tab:nll_prior}, we see that models having a flexible prior easily outperform models with a flexible posterior. 
    Likelihood of a model with flexible prior can be further improved by using expressive posterior (Table \ref{tab:nll_both}) such as real NVP (81.70 $\rightarrow$ 81.55), or with MADE to introduce more autoregressive dependencies (81.55 $\rightarrow$ 80.81).

	\section{Discussion and Future Work}
	In this paper, we first reinterpret training with the variational lower bound as layer-wise density estimation. 
	Treating the Monte Carlo samples from the approximate posterior distributions as projected data distribution suggests using a flexible prior to avoid overestimate of entropy.
	We leave experiments on larger datasets and sample generation as future work.
	Second, we showed that parameterizing inverse AF using neural networks allows us to universally approximate any posterior, which theoretically justifies the use of inverse AF. 
	Our proof also implies using affine coupling law to autoregressively warp the distribution is limited.
	It is thus possible to consider designs of more flexible invertible functions to improve approximate posterior.

	\section*{Acknowledgements} 
	We thank NVIDIA for donating a DGX-1 computer used in this work.

	\nocite{langley00}
	
	\bibliography{ref}
	\bibliographystyle{icml2017}
	
\end{document}